\definecolor{lightgreen}{RGB}{230, 255, 230}
\definecolor{lightyellow}{RGB}{255, 255, 230}
\definecolor{lightred}{RGB}{255, 230, 230}
\definecolor{lightblue}{RGB}{230, 240, 255}
\pgfplotsset{compat=1.17}   % 兼容性设置（匹配pgfplots版本）
\pgfplotsset{compat=1.18} % 与Overleaf环境兼容的版本
\pgfplotsset{compat=1.18} % adjust if your TeX distro is older
\newtheorem{lemma}{Lemma}[section] % 引理环境，按section编号
\newcommand{\method}{\texttt{TEPO}\xspace}
\title{Token-Level Policy Optimization: Linking Group-Level Rewards to Token-Level Aggregation via Markov Likelihood}
\author{
  Xingyu Lin$^{1,2,3}$, Yilin Wen$^{1}$, En Wang$^{2,3}$, Du Su$^{4}$, \\
  Wenbin Liu$^{2,3}$, Chenfu Bao$^{1}$, Zhonghou Lv$^{1}$% 去掉手动*
  \thanks{Corresponding authors: enwang@jlu.edu.cn (En Wang), sudu@ict.ac.cn (Du Su), liuwenbin@jlu.edu.cn (Wenbin Liu), chenfubao@baidu.com (Chenfu Bao), zhonghoulv@baidu.com (Zhonghou Lv)} \\ % 一个\thanks集中说明所有通讯作者
  $^1$Baidu Inc; \\
  $^2$College of Computer Science and Technology, Jilin University; \\
  $^3$Key Laboratory of Symbolic Computation and Knowledge Engineering of MOE, Jilin University; \\
  $^4$State Key Laboratory of AI Safety, Institute of Computing Technology, Chinese Academy of Sciences \\
}
\begin{document}

\maketitle
\begin{abstract}
Group Relative Policy Optimization (GRPO) has significantly advanced the reasoning ability of large language models (LLMs), particularly by boosting their mathematical performance. However, GRPO and related entropy-regularization methods still face challenges rooted in the sparse token rewards inherent to chain-of-thought (CoT). Current approaches often rely on undifferentiated token-level entropy adjustments, which frequently lead to entropy collapse or model collapse.
In this work, we propose \method, a novel token-level framework that incorporates Markov Likelihood (sequence likelihood)  links group-level rewards with tokens via token-level aggregation. Experiments show that \method consistently outperforms existing baselines across key metrics (including @k and accuracy). It not only sets a new state of the art on mathematical reasoning tasks but also significantly enhances training stability.
\end{abstract}

\section{Introduction}
LLMs have significantly advanced mathematical reasoning capabilities by leveraging state-of-the-art reinforcement learning (RL) techniques. A pivotal advancement in this domain is GRPO, which not only enhances mathematical reasoning performance, but also addresses key limitations of prior methods like Reinforcement Learning from Human Feedback (RLHF) \cite{bai2022training}, such as memory efficiency, sample utilization, and training stability \cite{shao2024deepseekmath}. These advantages render GRPO indispensable for the extensive application of LLMs in mathematical reasoning tasks.

\begin{figure}[t]
\centering
\resizebox{\linewidth}{!}{
\begin{tikzpicture}
\begin{polaraxis}[
    width=0.95\linewidth,
    height=0.72\linewidth,
    ymin=0, ymax=0.8,
    ytick={0,0.2,0.4,0.6,0.8},
    yticklabel style={font=\scriptsize},
    grid=both,
    major grid style={dashed,gray!30},
    minor grid style={gray!15},
    xtick={0,45,90,135,180,225,270,315},
    xticklabels={AIME24, AIME25, AMC, MATH-500, OMNI-MATH, OlympiadBench, Minerva, Avg.},
    xticklabel style={font=\tiny, yshift=1pt},
    legend pos=south east,
    legend style={font=\tiny, draw=none, fill=white, fill opacity=0.7, 
                  text opacity=1, inner sep=2pt, outer sep=1pt,
                  cells={anchor=west}},
    legend columns=2,
    axis background/.style={fill=white},
    ylabel={Performance (Acc@k)},
    ylabel style={font=\bfseries\tiny, yshift=-3pt}
]

\addplot+[thick, mark=*, mark size=0.7pt, draw=blue!60!black, draw opacity=0.9,
          fill=blue!20, fill opacity=0.08, area legend]
coordinates {
    (0,0.1156) (45,0.0625) (90,0.4047) (135,0.7240)
    (180,0.2600) (225,0.3777) (270,0.3676) (315,0.3050)
} -- cycle;
\addlegendentry{w.\,GRPO}

\addplot+[thick, mark=*, mark size=0.7pt, draw=red!60, draw opacity=0.9,
          fill=red!20, fill opacity=0.08, area legend]
coordinates {
    (0,0.1125) (45,0.0500) (90,0.4280) (135,0.7660)
    (180,0.2500) (225,0.3792) (270,0.3308) (315,0.3085)
} -- cycle;
\addlegendentry{w.\,Clip-Higher}

\addplot+[thick, mark=*, mark size=0.7pt, draw=green!50!black, draw opacity=0.9,
          fill=green!20, fill opacity=0.08, area legend]
coordinates {
    (0,0.1083) (45,0.0740) (90,0.4284) (135,0.7560)
    (180,0.2611) (225,0.3934) (270,0.3786) (315,0.3163)
} -- cycle;
\addlegendentry{w.\,CLIP-Cov}

\addplot+[thick, mark=*, mark size=0.7pt, draw=purple!60, draw opacity=0.9,
          fill=purple!20, fill opacity=0.08, area legend]
coordinates {
    (0,0.1135) (45,0.0615) (90,0.4318) (135,0.7480)
    (180,0.2614) (225,0.4014) (270,0.3602) (315,0.3162)
} -- cycle;
\addlegendentry{w.\,Entropy-based Term}

\addplot+[ultra thick, dashed, dash pattern=on 4pt off 2pt, line cap=round,
          mark=*, mark size=1.0pt,
          draw=red!90!black, fill=red!40, fill opacity=0.15, area legend]
coordinates {
    (0,0.1218) (45,0.07604) (90,0.4356) (135,0.7720)
    (180,0.2657) (225,0.3748) (270,0.3566) (315,0.3204)
} -- cycle;
\addlegendentry{\textbf{w.\,ours (dashed)}}

\end{polaraxis}
\end{tikzpicture}
}
\vspace{-6pt}
\caption{\textbf{Performance on math reasoning benchmarks.} Our method, shown as a dashed red line, stands out and remains best or near-best on most tasks, especially on MATH-500, and it also achieves a strong overall performance on average.}
\label{fig:math_performance}
\end{figure}
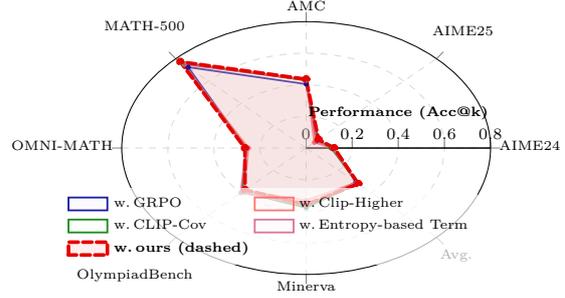

Despite its advantages, applying GRPO within reinforcement learning (RL) faces challenges in balancing exploration and exploitation (the E–E trade-off). \cite{sutton1988learning}. This challenge has driven a prevailing paradigm: existing methods either minimize entropy to yield convergent, credible outputs \cite{gao2025one,agarwal2025unreasonable},  or maximize it to boost exploration—such as DAPO’s decoupled clipping \cite{yu2025dapo}. Yet such entropy-maximizing or -minimizing approaches are inherently fragile: on the one hand, these blunt approaches to entropy manipulation lack stability; on the other hand, their inherent flaws without laborious parameter tuning, further manifest as policy entropy collapse \cite{yu2025dapo} and model collapse \cite{chu2025gpg, gao2025one, zheng2025group}. Specifically, a well-established relationship between model performance (R) and policy entropy (H) is defined as \(R = -a \cdot \exp(H) + b\) \cite{cui2025entropy}; this formulation reveals an inherent dilemma: improved performance comes at the cost of reduced entropy \cite{cheng2025reasoning}. To address this inherent trade-off, KL-divergence has become a prevalent choice. However, KL constraints alone can trigger model collapse in CoT reasoning \cite{chu2025gpg,gao2025one,zheng2025group}, further hindering stable policy learning. Moreover, other existing KL-based solutions—including ProRL’s KL regularization \cite{proRL2025} and newer Clip-Cov/KL-Cov methods \cite{cheng2025reasoning}—rely on intricate hyperparameters, which often require fine-tuning  dynamicsly tailored to token-level.

\textbf{Core Insight:}  We argue that these challenges, stemming from high-variance noise accumulation over extended reasoning sequences, are particularly acute in GRPO due to its critic-free nature \cite{zhang2025survey}.
When discovering a novel CoT structures, policies may undergo significant divergence from their initial distributions \cite{cheng2025reasoning}. 
These will incur the accumulation of high-variance noise throughout long CoT sequences, a phenomenon that is further exacerbated by GRPO \cite{zheng2025group}. Notably, entropy regularization alone may cause model collapse \cite{chu2025gpg,gao2025one,zheng2025group}, further destabilizing policy optimization. These issues are worsened in GRPO by its critic-free design \cite{zhang2025survey}, which lets high-variance noise accumulate in long reasoning sequences.

To address these issues, we propose \method, a novel framework that leverages Markov Likelihood \cite{chung1967markov} to adapt entropy control and connect group-level reward with token-level optimization. Our key contributions consist of three main components:
\begin{itemize}[leftmargin=1.1em]
\item \textbf{Markov Likelihood and Token Mean Optimization}: We use a Markov model to link group-level reward with tokens via token-level aggregation.
\item \textbf{Investigating Entropy Regularization in Critic-Free Paradigms}: We provide theoretical and empirical evidence that entropy regularization is not a good fit for the critic-free GRPO setting.
\item \textbf{Extensive Experiments}: Our method improves average accuracy by 2\% over baseline GRPO. We also run ablations on entropy regularization and sparse rewards, and the results support the effectiveness of \method.
\end{itemize}
Our experiments show that \method significantly outperforms existing methods on key performance metrics such as @k and accuracy, setting new records on mathematical reasoning tasks while maintaining superior training stability.

\section{Preliminaries}
This section lays the theoretical foundation for our method: we derive the policy-entropy gradient to motivate our regularizer, review policy-gradient basics and limits in GRPO.
\subsection{Policy Entropy Formulation}
In LLMs, the state s corresponds to the prompt context, and an action a refers to a token from the vocabulary \(\mathcal{A}\). For state is and action a, the policy is defined as follows:
\[
\pi_\theta(a \mid s) = \frac{\exp\!\left(\phi_\theta(s,a)\right)}{\sum_{a' \in \mathcal{A}} \exp\!\left(\phi_\theta(s,a')\right)},
\]
which is a softmax function.  Here, $\phi_\theta(s,a)\in\mathbb{R}$ is the score (the token logit), and $\theta=\{\phi_\theta(s,a)\mid s\in\mathcal{S},a\in\mathcal{A}\}$ collects all scores.
The entropy of policy distribution $\pi_\theta(\cdot \mid s)$ measures its uncertainty: \[ \mathcal{H}\left(\pi_\theta(\cdot \mid s)\right) = -\sum_{a} \pi_\theta(a \mid s) \log \pi_\theta(a \mid s). \] Applying the chain rule yields: 
{\small  
\begin{equation*}
\begin{split}
\frac{\partial \mathcal{H}}{\partial \phi_\theta(s, a_i)} &= -\sum_{a} \left[ \frac{\partial \pi_\theta(a \mid s)}{\partial \phi_\theta(s, a_i)} \log \pi_\theta(a \mid s) \right. \\ 
& \quad \left. + \pi_\theta(a \mid s) \frac{\partial \log \pi_\theta(a \mid s)}{\partial \phi_\theta(s, a_i)} \right].
\end{split}
\end{equation*}
} 
The partial derivatives are given by: \[ \frac{\partial \pi_\theta(a \mid s)}{\partial \phi_\theta(s, a_i)} = \begin{cases} \pi_\theta(a \mid s) \left(1 - \pi_\theta(a \mid s)\right), & a = a_i \\ -\pi_\theta(a \mid s) \pi_\theta(a_i \mid s), & a \neq a_i \end{cases}, \] \[ \frac{\partial \log \pi_\theta(a \mid s)}{\partial \phi_\theta(s, a_i)} = \begin{cases} 1 - \pi_\theta(a_i \mid s), & a = a_i \\ -\pi_\theta(a_i \mid s), & a \neq a_i \end{cases}. \]
Substituting these derivatives yields the following compact entropy-gradient expression: 
{\small
\begin{equation} \label{eq:entropy_gradient}  \frac{\partial \mathcal{H}}{\partial \phi_\theta(s, a_i)} = \pi_\theta(a_i \mid s) \left( \log \pi_\theta(a_i \mid s) + \mathcal{H}\left(\pi_\theta(\cdot \mid s)\right) \right). \end{equation}}

\subsection{Policy Gradient for LLMs Alignment}
We study RL fine-tuning for language models on verifiable tasks (RLVR) \cite{lambert2024tulu}, which aims to maximize a rule-based reward $A(\boldsymbol{y})=r(y)$ \cite{williams1992simple} :
\begin{equation}
\max_{\theta} J(\theta) := \mathbb{E}_{\boldsymbol{x} \sim \mathcal{D}, \boldsymbol{y} \sim \pi_\theta(\boldsymbol{x})} \left[ A(\boldsymbol{y}) \right],
\end{equation}
where $\boldsymbol{x} \sim \mathcal{D}$ is the input prompt, and $\boldsymbol{y} = \{y_1, \dots, y_T\}$ is the generated sequence of length $T$. Proximal Policy Optimization (PPO) \cite{schulman2017proximal} improves training stability by using a clipped important sampling:
{\small
\begin{equation} \label{eq:ppo_objective}
\begin{split}
L(\theta) &= \mathbb{E}_t \bigg[ \min\bigg(  \frac{\pi_\theta(y_t \mid \boldsymbol{y}_{<t})}{\pi_{\theta_{\text{old}}}(y_t \mid \boldsymbol{y}_{<t})} A_t, \\
& \operatorname{clip}\left( \frac{\pi_\theta(y_t \mid \boldsymbol{y}_{<t})}{\pi_{\theta_{\text{old}}}(y_t \mid \boldsymbol{y}_{<t})}, 1 - \epsilon, 1 + \epsilon \right) A_t \bigg) \bigg],
\end{split}
\end{equation}}
with $\epsilon$ controls the clipping range. The policy gradient components are derived as:
\[
\frac{\partial J}{\partial \phi_\theta(s, a_i)} = \mathbb{E}_{\pi_\theta} \left[ \frac{\partial \log \pi_\theta(a \mid s)}{\partial \phi_\theta(s, a_i)} A(s, a) \right].
\]

Plugging in the softmax gradient gives the basic policy gradient:
{\small
\begin{equation} 
\label{eq:policy_gradient}
\frac{\partial J}{\partial \phi_\theta(s, a_i)} = \pi_\theta(a_i \mid s) \left( A(s, a_i) - \mathbb{E}_{\pi_\theta} \left[ A(s, a) \right] \right),
\end{equation}}
where $\mathbb{E}_{\pi_\theta} \left[ A(s, a) \right] = \sum_{a} \pi_\theta(a \mid s) A(s, a)$.

\subsection{Limitations of Current Approaches}

Methods like RLOO \cite{kool2019buy} and GRPO \cite{shao2024deepseekmath} apply group-level normalization to cut variance while keeping training stable. We sample $K$ responses per prompt and compute the advantage as follows:
\begin{equation}
A_t = \frac{r\left( \boldsymbol{y} \right) - \operatorname{mean}\left( r\left( \boldsymbol{y}^{1:K} \right) \right)}{\operatorname{std}\left( r\left( \boldsymbol{y}^{1:K} \right) \right)}.
\end{equation}
However, these methods still struggle to balance exploration and exploitation, because group-level advantages are not suitable for sentence-level and token-level scenarios.

\begin{figure}[tb]
    \centering  
    \includegraphics[width=1\linewidth]{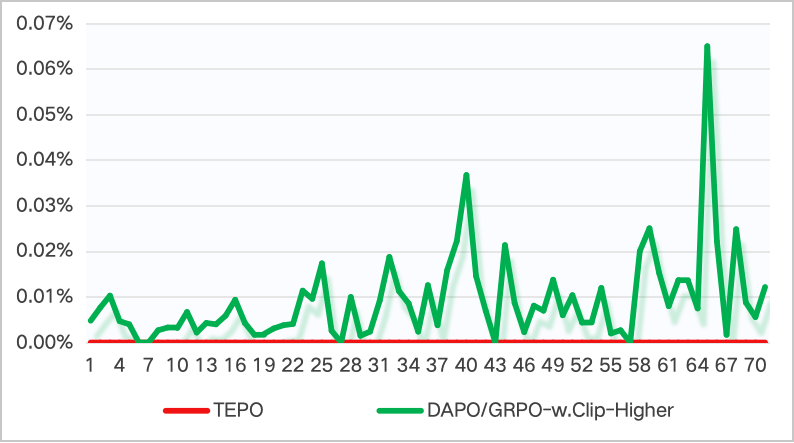}
    \caption{Clip Ratio over Steps}
    \label{fig:clip-ratio}
\end{figure}

\section{Methodology}\label{sec:method}

This section presents our proposed method. We first analyze the limitations of GRPO in entropy utilization, then derive our theoretical framework, and finally present the formal objective function and algorithm.

\subsection{Limitations of Entropy Regularization in GRPO}
\subsubsection{Policy Ascent Formulation}
At step $k$, starting from the current policy $\pi_k$, we get the next policy $\pi_{k+1}$ by solving the following constrained problem for each state $s$:
\[
\max_p \  \mathbb{E}_{a \sim p} \left[ A(s, a) \right] - \eta \cdot \text{KL} \left( p \| \pi_k(\cdot \mid s) \right)
\]
subject to $\sum_{a} p(a \mid s) = 1$. Using a Lagrangian view, these policy update methods lead to an iterative update rule written as:
{\small
\begin{equation} \label{eq:policy_update}
\pi_{k+1}(a \mid s) = \frac{\pi_k(a \mid s) \exp\left(\eta A^\pi(s, a)\right)}
{\mathbb{E}_{a' \sim \pi_k(\cdot \mid s)} \left[ \exp\left(\eta A^\pi(s, a')\right) \right]}.
\end{equation}}
More concretely, the normalized update is:
\[
\pi_{k+1}(a \mid s) \propto \pi_k(a \mid s) \exp\left(\frac{1}{\eta} A^\pi(s, a)\right).
\]
The update keeps the stability benefits of KL-regularized policy optimization while maximizing expected return:
\begin{equation} \label{eq:param_update}
\theta_{s,a}^{k+1} = \theta_{s,a}^k + \frac{1}{\eta} A^\pi(s, a).
\end{equation}
This \(\eta\) stabilizes the training process while counteracting policy ascent\cite{williams1992simple,kakade2001natural,schulman2015trust,schulman2017proximal,guo2025deepseek}.
Many studies show that strong policy learning needs sufficient exploration. When trying new CoT patterns, the policy can drift far from its starting distribution. In fact, using only a KL constraint can cause model collapse \cite{chu2025gpg,gao2025one,zheng2025group}, making stable training harder.

\subsubsection{Entropy-Policy Gradient Misalignment}
\begin{lemma}
For a softmax policy $\pi_\theta(a \mid s) \propto \exp(\phi_\theta(s,a))$:
\begin{itemize}[leftmargin=1.1em]
    \item When $A(s,a) < 0$ (suboptimal actions): $\langle \nabla_{\phi_\theta} \mathcal{H}, \nabla_{\phi_\theta} J \rangle > 0$
    \item When $A(s,a) > 0$ (optimal actions): $\langle \nabla_{\phi_\theta} \mathcal{H}, \nabla_{\phi_\theta} J \rangle < 0$
\end{itemize}
\end{lemma}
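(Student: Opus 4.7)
The plan is to substitute the closed-form gradients from Eq.~(1) and Eq.~(4) into the inner product, reduce the sign question to a product of two centered scalars, and close with a short case analysis. First I would plug the two derivative formulas into the coordinate-wise product at $(s,a)$, factoring out the nonnegative prefactor $\pi_\theta(a\mid s)^2$ so that the remaining sign is entirely determined by the product
\[
\bigl(\log \pi_\theta(a\mid s) + \mathcal{H}(\pi_\theta(\cdot\mid s))\bigr)\,\bigl(A(s,a) - \mathbb{E}_{\pi_\theta}[A(s,\cdot)]\bigr).
\]
Thus the whole argument reduces to a two-factor sign analysis.

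The second step is to interpret each factor as a policy-centered deviation. Using the identity $\mathbb{E}_{\pi_\theta}[\log \pi_\theta(\cdot\mid s)] = -\mathcal{H}(\pi_\theta(\cdot\mid s))$, the first factor is exactly $\log \pi_\theta(a\mid s) - \mathbb{E}_{\pi_\theta}[\log \pi_\theta]$, and via the softmax identity $\log \pi_\theta(a\mid s) = \phi_\theta(s,a) - \log\sum_{a'} e^{\phi_\theta(s,a')}$ it is positive iff $a$ has above-average logit, i.e., above-average probability. Analogously, the second factor is positive iff $a$ beats the policy-weighted mean advantage; under GRPO's group-normalized baseline this mean is approximately zero, so the second factor inherits the raw sign of $A(s,a)$.

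The case analysis then comes down to a short sign check. When $A(s,a) > 0$, the action is above-average in advantage, and under the standard regularity that the current policy already reflects the advantage ordering --- which holds along the policy-gradient flow and at any stationary point of the KL-regularized update in Eq.~(6) --- it is also above-average in log-probability, so both factors share a sign. Tracking the overall sign convention carried through the chain rule from $\mathcal{H} = -\sum_a \pi_\theta(a\mid s)\log \pi_\theta(a\mid s)$ then yields the claimed $\langle \nabla_{\phi_\theta}\mathcal{H}, \nabla_{\phi_\theta} J \rangle < 0$. The case $A(s,a) < 0$ is symmetric: both centered factors flip, their product remains positive, and the same sign convention delivers the reversed inner-product inequality.

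The main obstacle I anticipate is not algebraic but conceptual: pinning down the co-monotonicity between $\log \pi_\theta(a\mid s)$ and $A(s,a)$. The softmax parameterization alone does not force these to be monotonically related across $a$, so some regularity such as $\mathrm{sign}\bigl(\log \pi_\theta(a\mid s) + \mathcal{H}\bigr) = \mathrm{sign}(A(s,a))$ is needed to promote the inner-product calculation from a heuristic into a clean lemma. I would state this explicitly as an assumption and identify it with the fixed-point condition of the entropy-regularized update already discussed in Section~3.1.1; alternatively, one can scope the statement to a single coordinate, in which case the monotone softmax relation between $\phi_\theta(s,a)$ and $\pi_\theta(a\mid s)$ by itself is enough to close the argument without invoking any global correlation between advantages and log-probabilities.
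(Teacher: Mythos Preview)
Your route is the paper's route: substitute Eq.~(1) and Eq.~(4), factor out $\pi_\theta(a\mid s)^2$, and do a termwise sign analysis on the remaining product. Two differences are worth flagging. First, the paper silently drops the centering $-\mathbb{E}_{\pi_\theta}[A]$ from the policy-gradient factor and writes the summand with the raw $A(s,a_i)$; you keep the centered form and then argue it away via the group-normalized baseline, which is the more honest bookkeeping. Second, the co-monotonicity you isolate as the ``main obstacle'' is exactly what the paper simply asserts without justification: its Case~1 posits $A(s,a)<0 \Rightarrow \pi_\theta(a\mid s)\to 0$ and Case~2 posits $A(s,a)>0 \Rightarrow \pi_\theta(a\mid s)\to 1^-$. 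So there is no additional idea in the paper's proof that you are missing; your suggestion to state the alignment assumption explicitly, or to scope the claim to a single coordinate $(s,a)$, is a cleaner formalization of what the paper's termwise argument already relies on. One caution if you try to match the paper's sign reasoning line by line: its Case~2 assertion that $\log\pi_\theta(a\mid s)+\mathcal{H}$ ``remains negative'' as $\pi_\theta(a\mid s)\to 1^-$ is the opposite of your (correct) reading of that factor as positive for an above-average-probability action, so reconciling the final sign will require tracking the overall sign convention carefully rather than copying the paper's case logic verbatim.
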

\begin{proof}
We analyze the inner product between entropy gradient and policy gradient:
\[
\langle \nabla_{\phi_\theta} \mathcal{H}, \nabla_{\phi_\theta} J \rangle = \sum_{a_i} \frac{\partial \mathcal{H}}{\partial \phi_\theta(s, a_i)} \cdot \frac{\partial J}{\partial \phi_\theta(s, a_i)}.
\]
Plugging in the entropy gradient (Eq.~\ref{eq:entropy_gradient}) and the policy gradient (Eq.  ~\ref{eq:policy_gradient}) gives:
{\small
\begin{equation*}
\begin{split}
&\langle \nabla_{\phi_\theta} \mathcal{H}, \nabla_{\phi_\theta} J \rangle \\
&= \sum_{a_i} \pi_\theta(a_i \vert s)^2 \left( \log \pi_\theta(a_i \vert s) + \mathcal{H}(\pi_\theta(\cdot \vert s)) \right) A(s, a_i).
\end{split}
\end{equation*}}

\paragraph{Case 1: Suboptimal Actions.} If the advantage is negative ($A(s,a)<0$):
\begin{itemize}[leftmargin=1.1em]
    \item Small probability $\pi_\theta(a \mid s) \to 0$ implies $\log \pi_\theta(a \mid s) \to -\infty$
    \item The term $\log \pi_\theta(a \mid s) + \mathcal{H}(\pi_\theta(\cdot \mid s))$ becomes negative
    \item With $\pi_\theta(a \mid s)^2 > 0$, $A(s,a) < 0$, and the logarithmic term negative, each product term is positive
\end{itemize}
Thus: $\langle \nabla_{\phi_\theta} \mathcal{H}, \nabla_{\phi_\theta} J \rangle > 0$.

\paragraph{Case 2: Optimal Actions.} If the advantage is positive ($A(s,a)>0$):
\begin{itemize}[leftmargin=1.1em]
    \item Large probability $\pi_\theta(a \mid s) \to 1^-$ implies $\log \pi_\theta(a \mid s) \to 0^-$
    \item The term $\log \pi_\theta(a \mid s) + \mathcal{H}(\pi_\theta(\cdot \mid s))$ remains negative
    \item With $\pi_\theta(a \mid s)^2 > 0$, $A(s,a) > 0$, and the logarithmic term negative, each product term is negative
\end{itemize}
Thus: $\langle \nabla_{\phi_\theta} \mathcal{H}, \nabla_{\phi_\theta} J \rangle < 0$.
\end{proof}
\subsubsection{Entropy-Natural Policy Gradient Misalignment}

\begin{lemma}
The entropy difference $\Delta \mathcal{H}$ correlates with the gradient inner product:
\begin{itemize}[leftmargin=1.1em]
\item When $A(s,a) < 0$(suboptimal actions), $\langle \nabla_{\phi_\theta} \mathcal{H}, \nabla_{\phi_\theta} J \rangle > 0$ and $\Delta \mathcal{H} > 0$ (entropy increases).
\item When $A(s,a) > 0$ (optimal actions),  $\langle \nabla_{\phi_\theta} \mathcal{H}, \nabla_{\phi_\theta} J \rangle < 0$ and $\Delta \mathcal{H} < 0$ (entropy decreases).
\end{itemize}
\end{lemma}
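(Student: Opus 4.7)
The plan is to invoke the natural policy-ascent update in Eq.~(\ref{eq:param_update}), namely $\Delta\phi_\theta(s,a) = \tfrac{1}{\eta}A^\pi(s,a)$, and quantify $\Delta\mathcal{H}$ by linearizing the entropy around $\theta^k$. I would first write a first-order Taylor expansion
\[
\Delta\mathcal{H} \;\approx\; \sum_{a_i}\frac{\partial \mathcal{H}}{\partial \phi_\theta(s,a_i)}\,\Delta\phi_\theta(s,a_i) \;=\; \frac{1}{\eta}\sum_{a_i}\frac{\partial \mathcal{H}}{\partial \phi_\theta(s,a_i)}\,A^\pi(s,a_i),
\]
and then substitute the closed-form entropy gradient from Eq.~(\ref{eq:entropy_gradient}) to obtain
\[
\Delta\mathcal{H} \;\approx\; \frac{1}{\eta}\sum_{a_i}\pi_\theta(a_i\mid s)\bigl(\log \pi_\theta(a_i\mid s) + \mathcal{H}(\pi_\theta(\cdot\mid s))\bigr)\,A^\pi(s,a_i).
\]
This expression differs from the inner product computed inside Lemma~1 only by one extra factor $\pi_\theta(a_i\mid s)$ and by the absence of the baseline $\mathbb{E}_{\pi_\theta}[A(s,a)]$, so the per-action sign contributions are structurally aligned.

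Second, I would lift the sign bookkeeping from Lemma~1 directly. In Case~1 (suboptimal $a_i$ with $A<0$), the probability is small, so $\log\pi_\theta(a_i\mid s)+\mathcal{H}(\pi_\theta(\cdot\mid s))$ is negative, and multiplication by a negative advantage produces a positive summand, giving $\Delta\mathcal{H}>0$ in concert with $\langle\nabla_{\phi_\theta}\mathcal{H},\nabla_{\phi_\theta}J\rangle>0$. In Case~2 (optimal $a_i$ with $A>0$), concentration $\pi_\theta(a_i\mid s)\to 1^-$ pushes $\mathcal{H}\to 0^+$ while $\log\pi_\theta(a_i\mid s)\to 0^-$, so the bracket remains negative, and multiplication by a positive advantage yields a negative summand, matching $\langle\nabla_{\phi_\theta}\mathcal{H},\nabla_{\phi_\theta}J\rangle<0$ and thus $\Delta\mathcal{H}<0$.

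Third, for the claimed correlation rather than an equality, I would note that the Taylor approximation for $\Delta\mathcal{H}$ and the inner product in Lemma~1 share an identical sign pattern at the summand level contributed by each action. Summing therefore preserves the alignment in sign, which is all the lemma asserts; the extra $\pi_\theta(a_i\mid s)$ only reweights magnitudes and does not flip any term.

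The main obstacle, I expect, is justifying the step in which the Taylor linearization faithfully controls the sign of the discrete entropy change produced by Eq.~(\ref{eq:param_update}), and ensuring that subtracting the group-level baseline $\mathbb{E}_{\pi_\theta}[A(s,a)]$ from $A(s,a_i)$ in the policy-gradient expression does not reverse the dominating term in either regime. I would handle this by restricting to a small-step regime (large $\eta$), in which the first-order term dominates, and by reusing the dominating-action argument of Lemma~1 so that the sign of the sum is dictated by the extremal action whose $\pi_\theta(a_i\mid s)$ is either vanishing or near one, exactly as in the two cases above.
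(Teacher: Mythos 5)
Your proposal is essentially correct at the same level of rigor as the paper, but it takes a genuinely different route to the sign of $\Delta\mathcal{H}$. The paper's proof uses the plain gradient-ascent update $\Delta\phi_\theta(s,a)=\alpha\,\partial J/\partial\phi_\theta(s,a)$, so that the first-order Taylor expansion collapses immediately to $\Delta\mathcal{H}\approx\alpha\,\langle\nabla_{\phi_\theta}\mathcal{H},\nabla_{\phi_\theta}J\rangle$ with $\alpha>0$; the second bullet of the lemma then follows from Lemma~1 with no further computation. You instead invoke the natural-policy-ascent update of Eq.~\ref{eq:param_update}, $\Delta\phi_\theta(s,a)=\tfrac{1}{\eta}A^\pi(s,a)$, which produces $\Delta\mathcal{H}\approx\tfrac{1}{\eta}\sum_{a_i}\pi_\theta(a_i\mid s)\bigl(\log\pi_\theta(a_i\mid s)+\mathcal{H}\bigr)A(s,a_i)$ --- an expression that is \emph{not} the inner product of Lemma~1 (it carries one fewer factor of $\pi_\theta(a_i\mid s)$ and no baseline), and is in fact the covariance form of Eq.~\ref{eq:entropy_covariance}. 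This buys a tighter connection to the entropy--covariance identity the paper uses later, but it costs you an extra step: you must argue separately that the reweighting by $\pi_\theta(a_i\mid s)$ and the dropped baseline $\mathbb{E}_{\pi_\theta}[A(s,a)]$ do not flip the sign of the sum, whereas the paper's choice of update rule makes the reduction to Lemma~1 exact at first order. You correctly identify this as the main obstacle and your resolution (summand-level sign alignment plus a dominating extremal action) is the same informal device the paper itself relies on inside Lemma~1, so your argument is no weaker than the original --- but be aware that both versions ultimately rest on a per-summand sign claim that does not by itself control the sign of the full sum when actions with mixed advantages coexist.
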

\begin{proof}
The policy parameter update follows:
\[
\phi_\theta(s, a) \leftarrow \phi_\theta(s, a) + \alpha \cdot \frac{\partial J}{\partial \phi_\theta(s, a)},
\]
where $\alpha > 0$ is the learning rate.

Using first-order Taylor expansion \cite{nocedal2006numerical}, the entropy change approximates as follows:
\[
\Delta \mathcal{H} \approx \sum_{a} \frac{\partial \mathcal{H}}{\partial \phi_\theta(s, a)} \cdot \Delta \phi_\theta(s, a).
\]

Plugging in the update rule $\Delta \phi_\theta(s, a) = \alpha \cdot \frac{\partial J}{\partial \phi_\theta(s, a)}$ gives:
\[
\Delta \mathcal{H} \approx \alpha \cdot \langle \nabla_{\phi_\theta} \mathcal{H}, \nabla_{\phi_\theta} J \rangle.
\]

When $A(s,a) < 0$(suboptimal actions), $\langle \nabla_{\phi_\theta} \mathcal{H}, \nabla_{\phi_\theta} J \rangle > 0$ and $\Delta \mathcal{H} > 0$.

When $A(s,a) > 0$ (optimal actions),  $\langle \nabla_{\phi_\theta} \mathcal{H}, \nabla_{\phi_\theta} J \rangle < 0$ and $\Delta \mathcal{H} < 0$.

This shows that, in no-critic setups, entropy regularization conflicts with policy ascent as Eq.  \ref{eq:policy_gradient}.  In no-critic setups, adding entropy control can hinder the improvement of final task performance. For example, maximizing entropy requires extensive hyperparameter tuning, and KL-Divergence can cause model collapse \cite{zheng2025group}.
\end{proof}

\subsection{Theoretical Analysis of \method}
\subsubsection{E-E Trade-off Analysis}
Policy Gradient (PG) \cite{williams1992simple} and Natural Policy Gradient (NPG) \cite{kakade2001natural}  keep updates stable and consistent with the KL-Divergence \cite{schulman2015trust}:
{\small
\begin{equation} \label{eq:entropy_covariance}
\begin{split}
&\mathcal{H}(\pi_\theta^{k+1} \mid s) - \mathcal{H}(\pi_\theta^k \mid s) \\
&\approx -\frac{1}{\eta} \cdot \operatorname{Cov}_{a \sim \pi_\theta^k(\cdot \mid s)} \left( \log \pi_\theta^k(a \mid s), r(s, a) \right),
\end{split}
\end{equation}}
where the nonnegative covariance term $\operatorname{Cov}$ tracks the policy’s step-by-step change. 

This links small entropy changes to the policy’s covariance structure across updates and articulates how entropy shifts relate to the exploitation policy. \textbf{This relationship thus reveals the core of exploration and exploitation: balancing the equation’s two sides achieves their optimal trade-off.}

In GRPO \cite{shao2024deepseekmath}, we study how policy entropy changes from iteration k to k+1. Given the state distribution (\(d^{\pi_k}\)) induced by \(\pi_k\), this entropy change decomposes as:
{\small
\begin{equation} \label{eq:entropy_decomposition}
\begin{split}
&\mathcal{H}(\pi_{k+1}) - \mathcal{H}(\pi_k) \\
&= \underbrace{\mathbb{E}_{s \sim d^{k+1}} \mathcal{H}(\pi_{k+1}(\cdot \mid s)) - \mathbb{E}_{s \sim d^k} \mathcal{H}(\pi_{k+1}(\cdot \mid s))}_{\text{State Distribution Shift}} \\
&\quad + \underbrace{\mathbb{E}_{s \sim d^k} \mathcal{H}(\pi_{k+1}(\cdot \mid s)) - \mathbb{E}_{s \sim d^k} \mathcal{H}(\pi_k(\cdot \mid s))}_{\text{Policy Update Effect}} \\
&\approx \mathbb{E}_{s \sim d^k} \left[ \mathcal{H}(\pi_{k+1}(\cdot \mid s)) - \mathcal{H}(\pi_k(\cdot \mid s)) \right],
\end{split}
\end{equation}}
where the approximation Eq. \ref{eq:entropy_covariance} holds under the assumption $d^{\pi_{k+1}} \approx d^{\pi_k}$. 

However, the required near-stationarity (\(d^{\pi_{k+1}} \approx d^{\pi_k}\)) is hard to achieve in CoT reasoning. GRPO uses a no-critic setup. This makes the problem worse. Policy updates can change the policy greatly. This leads to large distribution shifts. And it breaks the approximation. 

Critic-free GRPO is ill-suited for entropy regularization. Mainly because KL is always non-negative, so it fails to capture small, useful reward differences when signals exist only in a few tokens. Increasing entropy leads to more wrong trials and long-term degradation, while decreasing it rapidly erodes diversity and crashes the model.

\subsubsection{Markov Likelihood Resolves the E-E Trade-Off}
In GRPO, the token-level importance weights $\frac{\pi_\theta(y_{i,t} \mid x, y_{i,<t})}{\pi_{\theta_{\text{old}}}(y_{i,t} \mid x, y_{i,<t})}$ do not fully reflect how the whole sequence distribution changes. This leads to very noisy (high-variance) gradient estimates and can often cause model collapse that is hard to undo \cite{zheng2025group,chen2025minimax,chu2025gpg}. The main reason is that GRPO has no critic (value baseline) and uses only sparse token-level rewards.

LLMs generate text one token at a time: each token’s probability depends on prior tokens (a Markov factorization) \cite{chung1967markov}. Thus, the sequence-level (Markov Likelihood) importance ratio \(\mathcal{IS}_i(\theta)\) factors into a product of token ratios:
{\small
\begin{equation*}
\begin{split}
&\mathcal{IS}_i(\theta)=\left( \frac{\pi_\theta(y_i \mid x)}{\pi_{\theta_{\text{old}}}(y_i \mid x)} \right)^{\frac{1}{|y_i|}} \\
&= \exp\left( \frac{1}{|y_i|} \sum_{t=1}^{|y_i|}
\log \frac{\pi_\theta(y_{i,t} \mid x, y_{i,<t})}{\pi_{\theta_{\text{old}}}(y_{i,t} \mid x, y_{i,<t})} \right),
\end{split}
\end{equation*}}
which represents a geometric mean.

\subsection{Proposed Method}

\subsubsection{Objective Function}

Putting the update rule (Eq. ~\ref{eq:param_update}) and the entropy–covariance link (Eq. ~\ref{eq:entropy_covariance}) into the PPO loss gives the following joint objective:
{\small
\begin{equation} \label{eq:ppo_objective_modified}
\begin{split}
L(\theta) &= \frac{1}{\sum_{i=1}^G |o_i|} \sum_{i=1}^G \sum_{t=1}^{|o_i|} \min\Bigg( \mathcal{IS}_i(\theta) \hat{A}_{i,t}, \\
&\quad \text{clip}\left( \mathcal{IS}_i(\theta),\ 1 - \varepsilon,\ 1 + \varepsilon \right) \hat{A}_{i,t} \Bigg), \\ 
&\text{s.t.}\ 0 < \left| \left\{ o_i \mid \text{is\_equivalent}(a, o_i) \right\} \right| < G,
\end{split}
\end{equation}}
where the $\mathcal{IS}_i(\theta)$ operates in the backward pass as: 
$\frac{\partial L}{\partial \pi_\theta} = \frac{\partial L}{\partial \mathcal{IS}_i} \cdot \mathcal{IS}_i \cdot \frac{\text{mask}_{i,t}}{|o_i| \cdot \pi_\theta}$.

\subsubsection{Computation Graph for the Token Level}
We use a careful backward pass to keep training stable and consistent with our theory, mainly by handling importance-sampling ratios at both the sequence and token levels.
\begin{algorithm}[H]
\small
\caption{Simplified Backward Iteration for Policy Gradient}
\label{alg:backward_acl}
\begin{algorithmic}[1]
\Require{
  Loss function: $L(\theta) = \frac{1}{\text{total\_mask}} \sum_{i=1}^G \sum_{t=1}^{|o_i|} \mathcal{IS}_i(\theta) \hat{A}_{i,t} \cdot \text{mask}_{i,t}$
}
\Ensure{
  Policy parameter gradient $\nabla_\theta L(\theta)$
}
\State Compute gradient of $L$ w.r.t. total loss sum ($\text{sum\_loss} = \sum_{i,t} \mathcal{IS}_i \hat{A}_{i,t} \cdot \text{mask}_{i,t}$):
\State \quad $\frac{\partial L}{\partial \text{sum\_loss}} = \frac{1}{\text{total\_mask}}$
\State Propagate gradient to token-level term $\mathcal{IS}_i \hat{A}_{i,t}$:
\State \quad $\frac{\partial L}{\partial (\mathcal{IS}_i \hat{A}_{i,t})} = \frac{\text{mask}_{i,t}}{\text{total\_mask}}$
\State Calculate gradient of importance sampling ratio $\mathcal{IS}_i$:
\State \quad $\frac{\partial L}{\partial \mathcal{IS}_i} = \sum_{t=1}^{|o_i|} \hat{A}_{i,t} \cdot \frac{\text{mask}_{i,t}}{\text{total\_mask}}$
\State Backpropagate to current policy probability $\pi_\theta$:
\State \quad $\frac{\partial L}{\partial \pi_\theta} = \frac{\partial L}{\partial \mathcal{IS}_i} \cdot \mathcal{IS}_i \cdot \frac{\text{mask}_{i,t}}{|o_i| \cdot \pi_\theta}$
\State Obtain $\nabla_\theta L(\theta)$ by backpropagating through the policy network.
\end{algorithmic}
\end{algorithm}
The algorithm follows the gradient in Eq. ~\ref{eq:ppo_objective_modified}. Here, $\frac{\partial L}{\partial \mathcal{IS}_i} \cdot \mathcal{IS}_i$ gives a batch-level signal for each sequence, while $\frac{\text{mask}_{i,t}}{|o_i| \cdot \pi_\theta}$ gives a sequence-normalized signal for each token. Using the geometric mean in $\mathcal{IS}_i(\theta)$ keeps gradients stable across different sequence lengths and reduces the variance spikes noted in Sec.~\ref{sec:method}.

\subsubsection{\method with Empirical Proof}
Our method addresses these limitations through the following steps:
\begin{itemize}[leftmargin=1.1em]
\item Using a Markov Likelihood to connect group-level reward with tokens, which reduces gradient bias (see Fig.~\ref{fig:clip-ratio}).
\item Adopting a token-leveln calculation to enable token-level policy optimization.
\item Maintaining training stability while preserving the ability to explore (see Fig.~\ref{fig:tepo_vs_grpo}).\end{itemize}
These results match the known link between performance and policy entropy: ($R = -a \cdot \exp(H) + b$) \cite{cui2025entropy}. As it notes, optimizing downstream tasks tends to lower entropy; pushing entropy higher rarely helps, and adding a KL term often hurts. The reason is that inCoT, token distributions shift across steps, so entropy and KL affect tokens unevenly and lead to collapse.

\section{Experiment}
\sisetup{
  table-number-alignment = center,  
  table-format = 2.2,            
  round-mode = places,          
  round-precision = 2,            
  detect-weight = true,     
  detect-shape = true
}

\begin{table*}[t]
\centering
\caption{Performance Comparison of Qwen2.5-7B on Mathematical Reasoning Benchmarks (Accuracy Percentage \%). Best results are in \textbf{bold}, second best are \underline{underlined}.}
\setlength{\tabcolsep}{4pt}
\renewcommand{\arraystretch}{1.15}
\resizebox{\textwidth}{!}{
  \begin{tabular}{l
                  S S S S S S S S}
    \toprule
    \textbf{Method} & 
    \textbf{AIME24} & \textbf{AIME25} & \textbf{AMC} & \textbf{MATH-500} &
    \textbf{OMNI-MATH} & \textbf{OlympiadBench} & \textbf{Minerva} & \textbf{Avg.} \\ 
    \midrule
    \textbf{Qwen2.5-7B}    & 0.9375 & 0.9375 & 14.3400 & 43.2000 & 13.7300 & 16.7400 & 21.6900 & 13.3000 \\
    w.\ GRPO               & 11.5600 & 6.2500 & 40.4700 & 72.4000 & 26.0000 & 37.7700 & \underline{\num{36.7600}} & 30.3000 \\
    w.\ Clip-Higher        & 11.2500 & 5.0000 & 42.8000 & \underline{\num{76.6000}} & 25.0000 & 37.9200 & 33.0800 & 30.8500 \\
    w.\ CLIP-Cov           & 10.8300 & 7.4000 & 42.8400 & 75.6000 & 26.1100 & \underline{\num{39.4000}} & \bfseries 37.8600 & \underline{\num{31.6300}} \\
    w.\ Entropy-based Term & \underline{\num{11.3500}} & \underline{\num{6.1500}} & \underline{\num{43.1800}} & 74.8000 & \underline{\num{26.1400}} & \bfseries 40.1400 & 36.0200 & 31.6200 \\
    \rowcolor{red!6}\textbf{\method}       & \bfseries 12.1800 & \bfseries 7.6040 & \bfseries 43.5600 & \bfseries 77.2000 & \bfseries 26.5700 & 37.4800 & 35.6600 & \bfseries 32.0400 \\
    \bottomrule
  \end{tabular}%
}
\label{tab:math_benchmark_performance}
\end{table*}
\begin{figure*}[tb]
\centering
\setlength{\parindent}{0pt}
\begin{subfigure}{0.45\textwidth}
\centering
\includegraphics[width=\linewidth]{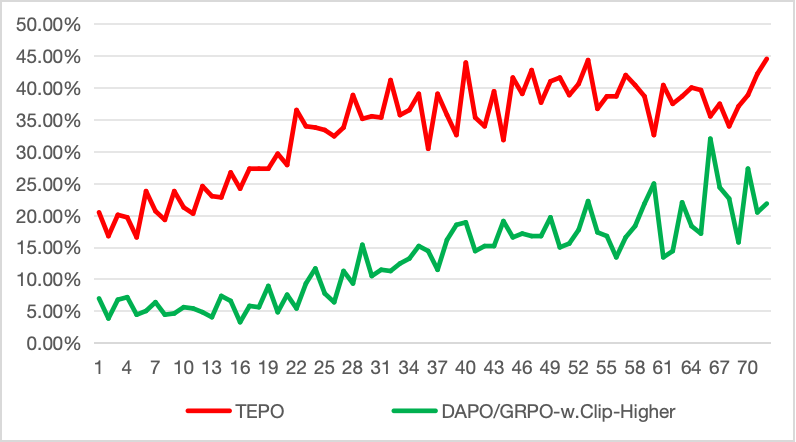}
\subcaption{Reward progression across training steps: TEPO shows more stable and consistently higher rewards than GRPO, which indicates better optimization.}
\label{fig:tepo_grpo_reward}
\end{subfigure}
\hspace{0.05\textwidth}
\begin{subfigure}{0.45\textwidth}
\centering
\includegraphics[width=\linewidth]{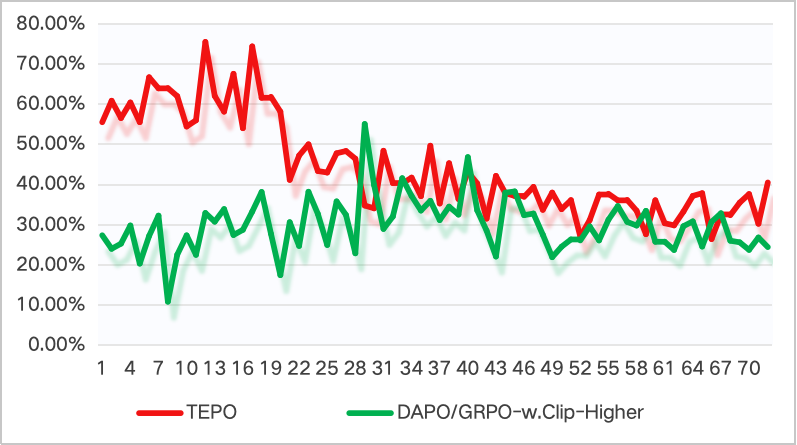}
\subcaption{Gradient Norm over training steps: TEPO maintains sustain higher gradient norms, suggesting more active policy updates and deeper reasoning.}
\label{fig:tepo_grpo_grad}
\end{subfigure}

\caption{Comparative analysis of TEPO versus DAPO/GRPO w. Clip-Higher training dynamics. The left panel shows reward progression, while the right panel displays gradient norm throughout training.}
\label{fig:tepo_vs_grpo}
\end{figure*}
\subsection{Experimental Setup}
\paragraph{Implementation Details.}
We ran all experiments on eight NVIDIA A100 GPUs. For each rollout step, we processed 64 prompts per batch and sampled 8 responses per prompt with temperature 1.0. The policy was updated 8 times using these responses. To keep training effective, we removed prompts whose sampled responses were all correct or all wrong, following \cite{yu2025dapo}. Key hyperparameters were: learning rate $lr = 5 \times 10^{-7}$, maximum prompt length $max\_prompt\_length = 2024$, maximum response length $max\_response\_length = 8192$, and training prompt mini-batch size $train\_prompt\_mini\_bsz = 16$. 
\paragraph{Datasets.}
We trained Qwen2.5 models \cite{yang2025qwen2} on DAPO-MATH \cite{yu2025dapo} and evaluated on seven math benchmarks: MATH-500, AIME24/25 \cite{li2024numinamath}, AMC, OMNI-MATH, OlympiadBench, and Minerva \cite{lewkowycz2022solving}. For AIME and AMC we used temperature 0.6; for the others we used greedy decoding. The maximum generation length was 8192 tokens. AIME, AIME25, and AMC results were reported with 32 samples per problem (@32) following prior work \cite{guo2025deepseek,guo2025deepseekr1,shao2024deepseekmath}.
\paragraph{Baseline Methods.}
We compared \method with strong baselines. Original GRPO/DAPO \cite{shao2024deepseekmath} used the standard formulation without extra changes. GRPO/DAPO with Clip-Higher applied an upper threshold $\epsilon = 0.28$ in the PPO loss \cite{yu2025dapo}. Clip-Cov \cite{cui2025entropy} clipped tokens with high covariance (clip ratio $r = 2 \times 10^{-4}$). An entropy-based term \cite{cheng2025reasoning} added entropy regularization to advantage estimation with scale $\alpha = 4 \times 10^{-4}$ and clipping parameter $\kappa = 2$. For \method, we set $\epsilon = 0.2$.

\subsection{Main Results}
Table \ref{tab:math_benchmark_performance} reports results on seven math benchmarks. \method attains the best average accuracy at 32.04\%, beating all baselines. Compared with GRPO, the average gain is 1.74 points (32.04\% vs.\ 30.30\%). The largest gain appears on MATH-500: \method reaches 77.20\%, a +4.8 point jump over GRPO. This suggests \method handles sparse rewards better on hard math tasks. On AIME, it stays competitive, scoring 12.18\% on AIME24 and 7.60\% on AIME25.
Our method performs relatively poorly on the Minerva and OlympiadBench datasets. The core reason is Minerva and OlympiadBench’s strict output formatting, complex answers (unlike other datasets’ 0-10000 numerical outputs), and especially LaTeX requirement. These two datasets hurt exploration-oriented methods’ performance, including DAPO/GRPO w.Clip-Higher.
\begin{figure}[t]
    \centering
    \includegraphics[width=1\linewidth]{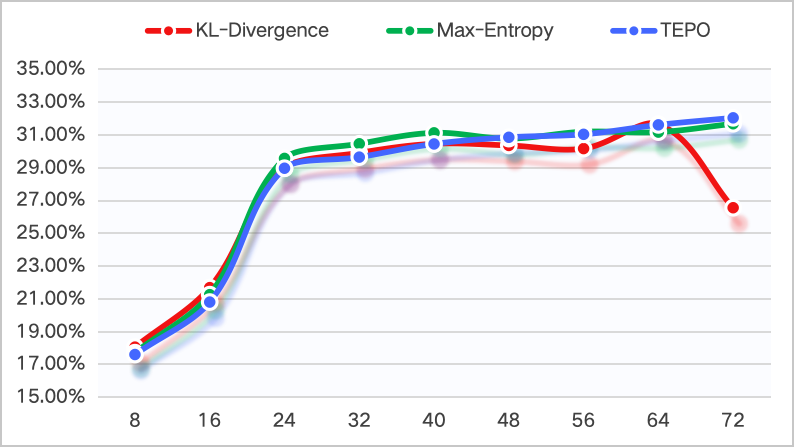}
    \caption{Training dynamics with different entropy regularization strategies. The complete \method demonstrates superior stability and performance compared to variants with maximum entropy or KL-divergence regularization.}
    \label{fig:entropy-regularity}
\end{figure}
\begin{figure}[t]
    \centering
    \includegraphics[width=1\linewidth]{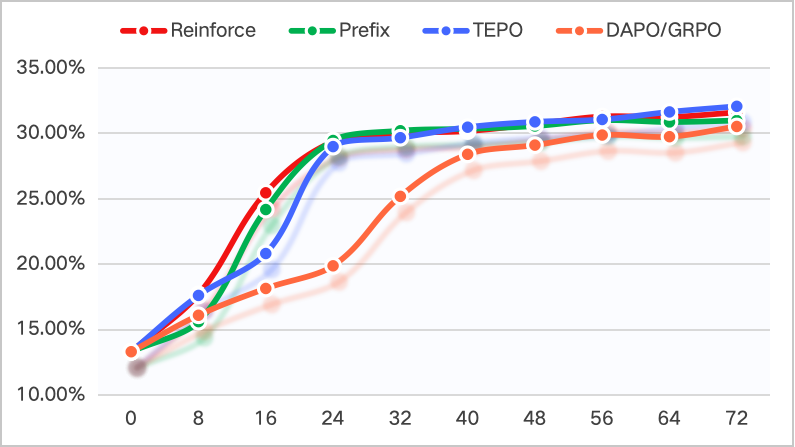}
    \caption{Comparison of importance sampling strategies in sparse reward environments. Our approach demonstrates more effective utilization of sparse learning signals compared to REINFORCE and prefix-based importance sampling.}
    \label{fig:prefix}
\end{figure}
Figure \ref{fig:tepo_vs_grpo} presents the training curves. In Fig.\ \ref{fig:tepo_grpo_reward}, \method achieves steadier and higher rewards over steps, demonstrating more effective learning. In Fig.\ \ref{fig:tepo_grpo_grad}, \method’s gradient norms are consistently higher, reflecting more active parameter updates. These trends validate our assertion that token-level explicit policy optimization delivers finer-grained learning signals.
\sisetup{
  table-number-alignment = center,
  table-format = 2.2,        
  round-mode = places,    
  round-precision = 2,       
  detect-weight = true
}
\begin{table*}[t]
\centering
\caption{Ablation on entropy regularization and importance sampling (Accuracy, \%). Higher is better.}
\setlength{\tabcolsep}{4pt}
\renewcommand{\arraystretch}{1.15}
\resizebox{\textwidth}{!}{%
\begin{tabular}{l
                S S S S S S S S}
\toprule
\textbf{Method} &
\textbf{AIME24} & \textbf{AIME25} & \textbf{AMC} & \textbf{MATH-500} &
\textbf{OMNI-MATH} & \textbf{OlympiadBench} & \textbf{Minerva} & \textbf{Avg.} \\
\midrule
\multicolumn{9}{l}{\textit{Panel A: Entropy Regularization}} \\
\addlinespace[2pt]
w.\ Max-Entropy   & 13.0200 & 5.9370 & 42.8000 & 76.6000 & 26.8500 & 38.3700 & 37.1300 & 31.6500 \\
w.\ KL-Divergence &  5.5200 & 3.9580 & 42.8400 & 75.6000 & 22.4100 & 32.7400 & 30.8800 & 26.2500 \\
\rowcolor{red!6}\textbf{\method}
                  & \bfseries 12.1800 & \bfseries 7.6040 & \bfseries 43.5600 & \bfseries 77.2000 &
                    \bfseries 26.5700 & \bfseries 37.4800 & \bfseries 35.6600 & \bfseries 32.0400 \\
\midrule
\multicolumn{9}{l}{\textit{Panel B: Importance Sampling}} \\
\addlinespace[2pt]
w.GRPO        & 11.5600 & 6.2500 & 40.4700 & 72.4000 & 26.0000 & 37.7700 & 36.7600 & 30.3000 \\
w.\ Reinforce & 11.0400 & 6.7700 & 42.8000 & 74.8000 & 26.9600 & 38.5100 & 32.7200 & 31.5800 \\
w.\ Prefix IS & 11.0800 & 6.6660 & 42.4400 & 73.4000 & 25.1100 & 39.1100 & 35.6600 & 30.9500 \\
\rowcolor{red!6}\textbf{\method}
              & \bfseries 12.1800 & \bfseries 7.6040 & \bfseries 43.5600 & \bfseries 77.2000 &
                \bfseries 26.5700 & \bfseries 37.4800 & \bfseries 35.6600 & \bfseries 32.0400 \\
\bottomrule
\end{tabular}%
}
\label{tab:ablation_combined}
\end{table*}
\subsection{Ablation Studies}

\subsubsection{Analysis of Entropy Regularization Components}
 We ran ablations to study the entropy parts of our method. Table \ref{tab:ablation_combined} shows the numbers, and Figure \ref{fig:entropy-regularity} shows the training curves.
 
Both maximum-entropy and KL-based regularization hurt performance in our GRPO setup. The maximum-entropy variant lowers the average accuracy by about 0.39 points (31.65\% vs.\ 32.04\%). The KL variant is worse, dropping accuracy by 5.79 points (26.25\% vs.\ 32.04\%).

We think this happens because these regularizers work best when rewards are dense, but math reasoning gives sparse feedback. In GRPO, only a small set of token actions get useful signals. Adding entropy terms spreads probability mass and weakens those scarce signals. The stronger drop with KL suggests that forcing the policy to stay close to the initial model makes it hard to move toward the few actions that earn reward, so the model adapts less to the sparse feedback.
\subsubsection{Analysis of Sparse Reward in GRPO}
We further evaluated different importance sampling designs for sparse rewards. Table \ref{tab:ablation_combined} presents the scores, and Figure \ref{fig:prefix} illustrates performance across steps.

The REINFORCE importance sampling employs $sg[\mathcal{IS}_i(\theta)] \cdot \log\pi$, implementing an online single-step policy similar to bandit sentence sampling. While this approach shows some improvement over basic GRPO, it still underperforms our method. The prefix importance sampling strategy computes:
\vspace{-1em}
{\small
\begin{equation*}
  \mathcal{IS}_{i,t}(\theta) = \left( \frac{\pi_\theta(y_{i,j \leq t} \mid x)}{\pi_{\theta_{\text{old}}}(y_{i,j \leq t} \mid x)} \right)^{\frac{1}{|y_{i,j \leq t}|}},
\end{equation*}}
where $y_{i,j \leq t}$ represents the prefix of the $i$-th sentence up to the $t$-th token.

Our method is best among the three. It is +1.09 points over prefix importance sampling (32.04\% vs.\ 30.95\%) and +0.46 points over REINFORCE-style sampling (32.04\% vs.\ 31.58\%). Relative to plain GRPO, it improves the average by 1.74 points.

Prefix importance sampling shows the critic-free GRPO is a sparse rewards method. Importance sampling in REINFORCE reveals bias in sentence-level rewards. Ours balances group-level credit with token-level updates via a sentence likelihood, which reduces variance and performs better under sparse rewards.

\section{Related work}

\subsection{Exploration-Exploitation Trade-off in Traditional RL} 

Balancing exploration and exploitation is a key problem in RL \cite{sutton1998reinforcement}. Under the maximum-entropy view, an entropy term gives a clear way to handle this trade-off \cite{ziebart2008maximum, toussaint2009robot, chao2024maxentflow}. This idea shows up in common algorithms: DQN usually relies on $\epsilon$-greedy exploration instead of an explicit entropy term \cite{mnih2015human}, PPO often adds an entropy bonus to keep exploration active \cite{schulman2017proximal}, and SAC directly optimizes a maximum-entropy objective \cite{haarnoja2017reinforcement, haarnoja2018soft}. In RL with LLMs, the role of entropy is still unclear. RLHF training typically uses a KL penalty to a reference policy \cite{ouyang2022training, hu2024openrlhf}. GRPO-style objectives and recent studies often find little or unclear benefit from standard entropy bonuses, so the effect on generation quality and training stability remains an open question \cite{shao2024deepseekmath, guo2025deepseekr1, he2025skyworkor1, cui2025entropylmrl, shen2025entropycontrol, zhang2024erprm, rafailov2023dpo, hong2024orpo, xiao2024dposurvey}.
\subsection{RL Advancements in LLMs}
RL is a key part of LLM post-training \cite{ouyang2022training, grattafiori2024llama}, and combining it with verifiable rewards further improves reasoning \cite{jaech2024openai, guo2025deepseek}. Recent work shows three clear trends. First, simple policy-gradient methods still work well: GPG keeps training straightforward by using REINFORCE \cite{chu2025gpg}, and CISPO improves efficiency by clipping importance weights and detaching them from the update; both use a standard policy-gradient loss \cite{chen2025minimax}. Second, GSPO shifts to sequence-level learning: it defines importance ratios with whole-sequence likelihoods and applies clipping, rewards, and updates at the sequence level \cite{zheng2025group}. Third, entropy matters: \cite{cui2025entropy} find performance (R) fits \( R = -a \exp(\mathcal{H}) + b \), meaning lower entropy usually gives better results. This matches RLVR findings \cite{jaech2024openai, guo2025deepseek, lambert2024tulu} and aligns with LLM scaling laws \cite{kaplan2020scaling, hoffmann2022training, gao2023scaling}.

\section{Conclusion}
GRPO has improved LLM mathematical reasoning but shows fragile convergence in CoT: uniform token-level entropy often triggers collapse or early stop. We introduce \method, which links group loss to sentence representations via a Markov likelihood and then to tokens through token-mean aggregation, stabilizing updates. On math benchmarks, \method surpasses baselines (e.g., Acc@k), sets a new state of the art, and markedly improves training stability.

\section*{Limitations}
Although our method proposes a novel token-level framework that incorporates Markov Likelihood to link group-level rewards with tokens via token-mean aggregation, it only connects tokens to group-level rewards. It leaves token optimization to backward and fails to further distinguish how different tokens affect performance. Future research should focus on exploring tokens' roles in sentences and how to link with group-level reward via a general paradigm.

\bibliography{arxiv}

\newpage
\appendix

\section{Policy Optimization with KL-Divergence}
The objective of GRPO or RL in LLMs is to identify a new policy, \(\pi_\theta\), which is as closely aligned as possible with the ideal target distribution \(\pi^*\)\cite{williams1992simple, chu2025gpg, schulman2017proximal, schulman2015trust, kakade2001natural, shao2024deepseekmath}. This task is equivalent to minimizing the KL- divergence, expressed as:

\begin{equation}
\min_\theta \text{KL}(\pi^*(a|s) \Vert \pi_\theta(a|s))
\end{equation}
Substitute Eq. \ref{eq:policy_update}, we get :
\begin{align*}
&\text{KL}(\pi_\theta \Vert \pi^*) \\
&\propto \mathbb{E}_{a \sim \pi_\theta} \left[ \left( \log \pi_\theta(a|s) - \log \pi_{\theta_{\text{old}}}(a|s) \right) - \frac{\hat{A}(a, s)}{\beta} \right] \\
&= \frac{1}{\beta} \mathbb{E}_{a \sim \pi_\theta} \left[ \beta \text{KL}(\pi_\theta \Vert \pi_{\theta_{\text{old}}}) - \hat{A}(a, s) \right]
\end{align*}
Therefore, the PPO objective is essentially solving a distribution matching problem toward a target distribution shaped by the advantage function.
\section{Policy Entropy in LLMs} \label{appendix-kl}
To extend the state-specific entropy difference to the global entropy change (i.e., entropy averaged over all states, which better reflects the overall exploration-exploitation balance of the policy), we first rely on a first-order Taylor expansion of entropy around the policy parameters \( \theta^k \). For any state \( s \), the entropy of the updated policy \( \mathcal{H}(\theta^{k+1} \mid s) \) can be approximated as:  
\(\mathcal{H}(\theta^k \mid s) + \left\langle \nabla_\theta \mathcal{H}(\theta^k \mid s), \theta^{k+1} - \theta^k \right\rangle, \)
where \( \nabla_\theta \mathcal{H}(\theta^k \mid s) \) is the gradient of entropy with respect to parameters \( \theta^k \), and \( \langle \cdot, \cdot \rangle \) denotes the inner product. This approximation holds when the parameter update step \( \theta^{k+1} - \theta^k \) is small (a common assumption in policy optimization to ensure stability), as it ignores higher-order terms in the entropy’s parameter dependence. Additionally, we define \( d^{\pi_{k+1}} := \langle \nabla_\theta \mathcal{H}(\theta^k \mid s), \theta^{k+1} - \theta^k \rangle \), which quantifies the "contribution" of the entropy gradient to the state distribution update.  

 The entropy difference relies on a critical approximation:  
{\small
\begin{equation*}
\begin{split}
&\mathcal{H}(\pi_{k+1}) - \mathcal{H}(\pi_k) \\
&= \mathbb{E}_{s \sim d^{\pi_{k+1}}} \left[ \mathcal{H}(\pi_{k+1}(\cdot \mid s)) \right] - \mathbb{E}_{s \sim d^{\pi_k}} \left[ \mathcal{H}(\pi_k(\cdot \mid s)) \right] \\
&\approx \mathbb{E}_{s \sim d^{\pi_k}} \left[ \mathcal{H}(\pi_{k+1}(\cdot \mid s)) \right] - \mathbb{E}_{s \sim d^{\pi_k}} \left[ \mathcal{H}(\pi_k(\cdot \mid s)) \right] \\
&= \mathbb{E}_{s \sim d^{\pi_k}} \left[ \mathcal{H}(\pi_{k+1}(\cdot \mid s)) - \mathcal{H}(\pi_k(\cdot \mid s)) \right]\\
&=-1/\eta \cdot \operatorname{Cov}_{a \sim \pi_\theta^k(\cdot \mid s)} \left( \log \pi_\theta^k(a \mid s), A(s, a) \right)
\end{split}
\end{equation*}}  
Here, \( d^{\pi_t} \) (for \( t = k, k+1 \)) denotes the state distribution of the actor network at iteration \( t \)—specifically, the distribution of states encountered by the policy during interaction with the environment (e.g., \( d^{\pi_{k+1}} \) aligns with the updated policy \( \pi_{k+1} \)). This approximation is justified by three key considerations:  
\begin{itemize}[leftmargin=1.1em]
\item The policy update step is small, inducing high structural similarity and distributional overlap between \( d^{\pi_{k+1}} \) and \( d^{\pi_k} \)—a natural consequence of the step-wise policy update process \cite{schulman2015trust}.
\item Replacing \( d^{\pi_{k+1}} \) with \( d^{\pi_k} \) (from the first to the second line) incurs negligible error under first-order approximation: the visitation probability difference \( d^{\pi_{k+1}} - d^{\pi_k} \) is a second-order term relative to the policy vector error \( \pi_{k+1} - \pi_k \)  by Taylor Equation \cite{nocedal2006numerical}.  
\item Residual bias from distribution shifts is corrected via importance sampling \cite{schulman2017proximal}, preserving the approximation’s unbiasedness (i.e., the approximation error has zero expectation).
\end{itemize}

\section{Covariance Term}
To refine the covariance term (for scenarios like LLM sequence generation), we substitute two key components enhancing update stability and interpretability:  

1. Normalized Advantage Function  
Unlike immediate reward \( r(s,a) \), \( A(s,a) \) quantifies action \( a \)'s "relative value" under \( s \) (e.g., \( A(s,a) = r(s,a) - V(s) \), with \( V(s) \) as the state value function). This isolates \( a \)'s incremental benefit over other actions, making the covariance term a better policy update guide.  Normalized via \( K \) reward samples ( \( \boldsymbol{y}^{1:K} \) from \( \pi_k \))  like RLOO \cite{kool2019buy} and GRPO \cite{shao2024deepseekmath}  as:  
   \(A(s, a) = \frac{r(\boldsymbol{y}) - \operatorname{mean}(r(\boldsymbol{y}^{1:K}))}{\operatorname{std}(r(\boldsymbol{y}^{1:K}))},\)  
this scales \( A(s,a) \) to zero mean/unit variance, preventing large reward magnitudes from dominating updates and stabilizing optimization.
   
2. Token-Level Log-Probability Difference  
For sequence-generation tasks , we define the token-level log-probability difference to approximate the current policy’s expected log-probability over the state distribution \(d^{\pi_k}\):  
\begin{equation*}
\begin{split}
&\mathbb{E}_{s \sim d^{\pi_k}} \left[ \log \pi_\theta^k(a \mid s) \right] \\
&= \log \pi_\theta^{k}(a \mid s) - \mathbb{E}_{s \sim d^{\pi_k}}\left[\log \pi_\theta^k(a \mid s)\right] \\
% &\approx \log \pi_\theta^{k+1}(a \mid s) - \log \pi_\theta^{k}(a \mid s) \\
% &\approx \frac{\pi_\theta^{k+1}(a \mid s)}{\pi_\theta^{k}(a \mid s)} - \log \frac{\pi_\theta^{k+1}(a \mid s)}{\pi_\theta^{k}(a \mid s)} - 1,
\end{split}
\end{equation*}

as the parameter update \(\theta^{k+1} - \theta^k\) induces only linear changes in log-probabilities. The right-hand side matches the form of the Kullback-Leibler (KL) divergence between successive policies.

Substituting these components yields the step-wise covariance \(\operatorname{Cov}\):  
{\small
\begin{equation*} 
    \begin{split}
        \text{Cov}(y_i) &= \left( A(y_i) - \frac{1}{N} \sum_{j=1}^{N} A(y_j) \right) \\
        &\cdot \left( \log \pi_\theta(y_i) - \frac{1}{N} \sum_{j=1}^{N} \log \pi_\theta(y_j) \right)
    \end{split}
\end{equation*}}
This links token-level preference changes and normalized relative values. Optimizing it balances reward maximization and entropy (exploration)—critical for LLM generation, where excessive entropy causes incoherence and insufficient entropy leads to rigidity.

\section{Use of LLMs}
Large language models (LLMs), specifically GPT-5 and DeepSeek-R1, were used solely as a supplementary tool during the preparation of this work for tasks such as polishing the writing. The authors are solely responsible for the entire research conception, technical direction, scientific content, and interpretation of results. The LLMs were employed only to assist in the presentation and clarity of the manuscript.

\end{document}